\newcommand{\sfd}{\mathsf d}
\newcommand{\pdpm}{p_{\mathrm{\tiny DPM}}}
\newcommand{\ip}[1]{\langle #1 \rangle}
\begin{document}

% If your paper is accepted and the title of your paper is very long,
% the style will print as headings an error message. Use the following
% command to supply a shorter title of your paper so that it can be
% used as headings.
%
%\runningtitle{I use this title instead because the last one was very long}

% If your paper is accepted and the number of authors is large, the
% style will print as headings an error message. Use the following
% command to supply a shorter version of the authors names so that
% they can be used as headings (for example, use only the surnames)
%
%\runningauthor{Surname 1, Surname 2, Surname 3, ...., Surname n}

% /* title and authors */
\runningtitle{BHC with Exponential Family: Small-Variance Asymptotics and Reducibility}
\runningauthor{Lee, Choi}

\twocolumn[

\aistatstitle{Bayesian Hierarchical Clustering with Exponential Family: \\
Small-Variance Asymptotics and Reducibility}

\aistatsauthor{ Juho Lee \and Seungjin Choi }

\aistatsaddress{ Department of Computer Science and Engineering \\
Pohang University of Science and Technology \\
77 Cheongam-ro, Nam-gu, Pohang 790-784, Korea \\
{\tt \{stonecold,seungjin\}@postech.ac.kr}
} ]

% /* abstract */
\begin{abstract}
Bayesian hierarchical clustering (BHC) is an agglomerative clustering method, where a probabilistic model is defined
and its marginal likelihoods are evaluated to decide which clusters to merge.
While BHC provides a few advantages over traditional distance-based agglomerative clustering algorithms,
successive evaluation of marginal likelihoods and careful hyperparameter tuning are cumbersome and limit the scalability.
In this paper we relax BHC into a non-probabilistic formulation, exploring small-variance asymptotics in
conjugate-exponential models.
We develop a novel clustering algorithm, referred to as {\em relaxed BHC} (RBHC),  
from the asymptotic limit of the BHC model that exhibits the scalability of 
distance-based agglomerative clustering algorithms as well as the flexibility of Bayesian nonparametric models.
We also investigate the reducibility of the dissimilarity measure emerged from the asymptotic limit of the BHC model,
allowing us to use scalable algorithms such as the nearest neighbor chain algorithm.
Numerical experiments on both synthetic and real-world datasets demonstrate the validity and high performance of our method.
\end{abstract}

% /* main manuscript begins */
\section{INTRODUCTION}
\label{sec:introduction}

Agglomerative hierarchical clustering, which is one of the most popular algorithms in cluster analysis,
builds a binary tree representing the cluster structure of a dataset~\citep{DudaRO2001book}. 
Given a dataset and a dissimilarity measure between clusters, 
agglomerative hierarchical clustering starts from leaf nodes corresponding to individual data points
and successively merges pairs of nodes with smallest dissimilarities to complete a binary tree.

Bayesian hierarchical clustering (BHC) \citep{HellerKA2005icml} is a probabilistic alternative of agglomerative 
hierarchical clustering. BHC defines a generative model on binary trees and compute the probability that 
nodes are merged under that generative model to evaluate the (dis)similarity between the nodes.
Since this (dis)similarity is written as a probability, one can naturally decide a level, where to stop
merging according to this probability. Hence, unlike traditional agglomerative clustering algorithms,
BHC has a flexibility to infer a proper number of clusters for given data. The source of this
flexibility is Dirichlet process mixtures (DPM) \citep{FergusonTS73as,AntoniakCE74aos} used to define the generative model of binary trees. 
BHC was shown to provide a tight lower bound on the marginal likelihood of DPM~\citep{HellerKA2005icml,WallachHM2010aistats}
and to be an alternative posterior inference algorithm for DPM. 
However, when evaluating the dissimilarity between nodes, one has to repeatedly compute the marginal likelihood
of clusters and careful tuning of hyperparameters are required.

In this paper, we study BHC when the underlying distributions are conjugate exponential families.
Our contributions is twofold. 
First, we derive a non-probabilistic relaxation of BHC, referred to as RBHC,
by performing {\em small variance asymptotics,} i.e., letting the variance of the underlying distribution in the model go to zero.
To this end, we use the technique inspired by the recent work \citep{KulisB2012icml,JiangK2012nips}, 
where the Gibbs sampling algorithm for DPM with conjugate exponential family was shown to approach 
a $k$-means-like hard clustering algorithm in the small variance limit.
The dissimilarity measure in RBHC is of a simpler form, compared to the one in the original BHC.
It does not require careful tuning of  hyperparameters, and yet has the flexibility of the original BHC to infer
a proper number of the clusters in data. 
It turns out to be equivalent to the dissimilarity proposed in \citep{TelgarskyM2012icml}, 
which was derived in different perspective, minimizing a cost function involving Bregman information~\citep{BanerjeeA2005jmlr}.
Second, we study the {\em reducibility}~\citep{BruynoogheM78} of the dissimilarity measure in RBHC.
If the dissimilarity is reducible, one can use \emph{nearest-neighbor chain algorithm}~\citep{BruynoogheM78} to 
build a binary tree with much smaller complexities, compared to the greedy algorithm.
The nearest neighbor chain algorithm builds a binary tree of $n$ data points in $O(n^2)$ time and $O(n)$ space,
while the greedy algorithm does in $O(n^2\log n)$ time and $O(n^2)$ space. 
We argue is that even though we cannot guarantee that the dissimilarity in RBHC is always reducible, 
it satisfies the reducibility in many cases, so it is fine to use the nearest-neighbor chain algorithm in practice to speed up 
building a binary tree using the RBHC.
We also present the conditions where the dissimilarity in RBHC is more likely to be reducible.

\section{BACKGROUND}
\label{sec:background}

We briefly review agglomerative hierarchical clustering, Bayesian hierarchical clustering, and Bregman clustering,
on which we base the development of our clustering algorithm RBHC.
Let $\bX = \{\bx_i\}_{i=1}^n$ be a set of $n$ data points. 
Denote by $[n] = \{1,\dots,n\}$ a set of indices.
A {\em partition} $\pi_n$ of $[n]$ is  a set of disjoint nonempty subsets of $[n]$ whose union is $[n]$.
The set of all possible partitions of $[n]$ is denoted by $\Pi_{n}$
For instance, in the case of $[5]=\{1,2,3,4,5\}$, an exemplary random partition that consists of three clusters is 
$\pi_5=\bigl\{ \{1\}, \{2,4\}, \{3,5\} \bigr\}$; its members are indexed by $c \in \pi_5$.
Data points in cluster $c$ is denoted by $\bX_c \defeq \{\bx_i | i \in c\}$ for $c \in \pi_n$.
Dissimilarity between $c_0 \in \pi_n$ and $c_1 \in \pi_n$ is given by $\sfd(c_0,c_1)$.

\subsection{Agglomerative Hierarchical Clustering}
\label{subsec:ahc}

Given $\bX$ and $\sfd(\cdot,\cdot)$, a common approach to building a binary tree for agglomerative hierarchical clustering 
is the greedy algorithm, where pairs of nodes are merged as one moves up the hierarchy, starting in its leaf nodes
(\textbf{\small{Algorithm~\ref{alg:greedy_ahc}}}).
A naive implementation of the greedy algorithm requires $O(n^3)$ in time since each iteration needs  $O(n^2)$
to find the pair of closest nodes and the algorithm runs over $n$ iterations.
It requires $O(n^2)$ in space to store pairwise dissimilarities. 
The time complexity can be reduced to $O(n^2\log n)$
with priority queue, and can be reduced further for some special cases; for example,
in single linkage clustering where 
\bee
\sfd(c_0,c_1) = \displaystyle{\min_{i\in c_0, j\in c_1}} \sfd(\{i\},\{j\}),
\eee
the time complexity is $O(n^2)$ since building a binary tree is equivalent to finding the minimum spanning tree in the dissimilarity graph. 
Also, in case of the centroid linkage clustering where the distance between clusters are defined as the Euclidean distance between
the centers of the clusters, one can reduce the time complexity to $O(kn^2)$ where $k \ll n$~\citep{DayWHE84joc}.

\begin{algorithm}[htp]
\small
\caption{\small Greedy algorithm for agglomerative hierarchical clustering}
\label{alg:greedy_ahc}
\begin{algorithmic}[1]
\REQUIRE $\bX = \{\bx_i\}_{i=1}^n$, $\sfd(\cdot,\cdot)$.
\ENSURE Binary tree.
\STATE Assign data points to leaves.
\STATE Compute $\sfd(\{i\},\{j\})$ for all $i,j\in [n]$.
\WHILE{the number of nodes $>$ 1}
\STATE Find a pair $(c_0,c_1)$ with minimum $\sfd(c_0,c_1)$.
\STATE Merge $c \leftarrow c_0\cup c_1$ and compute $\sfd(c, c')$ for all $c'\neq c$.
\ENDWHILE
\end{algorithmic}
\end{algorithm}

\begin{algorithm}[htp]
\small
\caption{\small Nearest neighbor chain algorithm}
\label{alg:nnchain_ahc}
\begin{algorithmic}[1]
\REQUIRE $\bX = \{\bx_i\}_{i=1}^n$, reducible $\sfd(\cdot,\cdot)$.
\ENSURE Binary tree.
\WHILE{the number of nodes $>$ 1}
\STATE Pick any $c_0$.
\STATE Build a chain $c_1 = \mathrm{nn}(c_0), c_2 = \mathrm{nn}(c_1),\dots$,
where $\mathrm{nn}(c) \defeq \argmin_{c'} \sfd(c,c')$.
Extend the chain until $c_i = \mathrm{nn}(c_{i-1})$ and $c_{i-1} = \mathrm{nn}(c_i)$.
\STATE Merge $c\leftarrow c_i\cup c_{i-1}$.
\STATE If $i \geq 2$, go to line 3 and extend the chain from $c_{i-2}$. Otherwise, go to line 2.
\ENDWHILE
\end{algorithmic}
\end{algorithm}

\subsection{Reducibility}
\label{subsec:reducibility}

Two nodes $c_0$ and $c_1$ are {\em reciprocal nearest neighbors} (RNNs) if the dissimilarity $\sfd(c_0,c_1)$
is minimal among all dissimilarities from $c_0$ to elements in $\pi_n$ and also minimal among all dissimilarities from $c_1$.
Dissimilarity $\sfd(\cdot,\cdot)$ is {\em reducible}~\citep{BruynoogheM78}, if for any  $c_0,c_1,c_2 \in \pi_n$,
\be
\label{eq:reducibility}
&\sfd(c_0,c_1) \leq \min\{\sfd(c_0,c_2),\sfd(c_1,c_2)\}\nn
& \Rightarrow \min\{\sfd(c_0,c_2),\sfd(c_1,c_2)\} \leq \sfd(c_0\cup c_1, c_2).
\ee

The reducibility ensures that if $c_0$ and $c_1$ are reciprocal nearest neighbors (RNNs),
then this pair of nodes are the closest pair that the greedy algorithm will eventually find by searching on an entire space.
Thus, the reducibility saves the effort of finding a pair of nodes with minimal distance.
Assume that $(c_0,c_1)$ are RNNs. Merging $(c_0,c_1)$ become problematic only if, for other RNNs $(c_2,c_3)$, 
merging $(c_2,c_3)$ changes the nearest neighbor of $c_0$ (or $c_1$) to $c_2\cup c_3$. 
However this does not happen since
\bee
&\sfd(c_2,c_3) \leq \min\{\sfd(c_2,c_0),\sfd(c_3,c_0)\} \\
&\Rightarrow \min\{\sfd(c_2,c_0),\sfd(c_3,c_0)\} \leq \sfd(c_2\cup c_3,c_0)\\
&\Rightarrow \sfd(c_0,c_1) \leq \sfd(c_2\cup c_3,c_0).
\eee
The nearest neighbor chain algorithm \citep{BruynoogheM78} enjoys this property and 
find pairs of nodes to merge by following paths in the nearest neighbor graph of the nodes until the paths terminate 
in pairs of mutual nearest neighbors ({\small{\textbf{Algorithm~\ref{alg:nnchain_ahc}}}}).
The time and space complexity of the nearest neighbor chain algorithm are $O(n^2)$ and $O(n)$, respectively.
The reducible dissimilarity includes those of  single linkage and Ward's method~\citep{WardJH63jasa}.

\subsection{Agglomerative Bregman Clustering}
\label{subsec:abc}

Agglomerative clustering with Bregman divergence \citep{Bregman67} as a dissimilarity measure was recently developed in
\citep{TelgarskyM2012icml}, where the clustering was formulated as the minimization of the sum of cost based on
the Bregman divergence between the elements in a cluster and center of the cluster. 
This cost is closely related with the Bregman Information used for Bregman hard clustering~\citep{BanerjeeA2005jmlr}).
In \citep{TelgarskyM2012icml}, the dissimilarity between two clusters $(c_0, c_1)$ is defined as the change of cost function 
when they are merged. As will be shown in this paper, this dissimilarity turns out to be identical to the one we derive
from the asymptotic limit of BHC. 
Agglomerative clustering with Bregman divergence showed better accuracies than traditional agglomerative hierarchical 
clustering algorithms on various real datasets~\citep{TelgarskyM2012icml}.

\subsection{Bayesian Hierarchical Clustering}
\label{subsec:bhc}

Denote by $\calT_c$ a tree whose leaves are $\bX_c$ for $c \in \pi_n$.
A binary tree constructed by BHC \citep{HellerKA2005icml} explains the generation of $\bX_c$
with two hypotheses compared in considering each merge: 
(1) the first hypothesis $\calH_c$ where all elements in $\bX_c$ were generated from a single cluster $c$;
(2) the alternative hypothesis where $\bX_c$ has two sub-clusters $\bX_{c_0}$
and $\bX_{c_1}$, each of which is associated with subtrees $\calT_{c_0}$ and $\calT_{c_1}$, respectively.
Thus, the probability of $\bX_c$ in tree $\calT_c$ is written as:
\be
\label{eq:bhc_recursive}
\lefteqn{p(\bX_c|\calT_c) = p(\calH_c)p(\bX_c|\calH_c) }\nn
&+& \{1-p(\calH_c) \}p(\bX_{c_0}|\calT_{c_0})p(\bX_{c_1}|\calT_{c_1}),
\ee
where the prior $p(\calH_c)$ is recursively defined as
\be
&\gamma_{\{i\}} \defeq \alpha,\,\,\gamma_c \defeq \alpha\Gamma(|c|)+\gamma_{c_0}\gamma_{c_1}, \\
&p(\calH_c) \defeq \alpha\Gamma(|c|)/\gamma_c,
\ee
and the likelihood of $\bX_c$ under $\calH_c$ is given by 
\be\label{eq:bhc_single}
p(\bX_c|\calH_c) \defeq \int \bigg\{\prod_{i\in c} p(\bx_i|\btheta)\bigg\} p(d\btheta).
\ee
Now, the posterior probability of $\calH_c$, which is the probability of merging $(c_0,c_1)$, is computed by Bayes rule:
\be
p(\calH_c|\bX_c,\calT_c) = \frac{p(\calH_c)p(\bX_c|\calH_c)}{p(\bX_c|\calT_c)}.
\ee

In \citep{LeeJH2014aistats}, an alternative formulation for the generative probability was proposed, which writes
the generative process via the unnormalized probabilities (potential functions):
\be
&\phi(\bX_c|\calH_c) \defeq \alpha\Gamma(|c|)p(\bX_c|\calH_c),\\
&\phi(\bX_c|\calT_c)\defeq \gamma_c p(\bX_c|\calT_c).
\ee
With these definitions, \eqref{eq:bhc_recursive} is written as
\be
\phi(\bX_c|\calT_c)= \phi(\bX_c|\calH_c) + \phi(\bX_{c_0}|\calT_{c_0})\phi(\bX_{c_1}|\calT_{c_1}),
\ee
and the posterior probability of $\calH_c$ is written as
\be
p(\calH_c|\bX_c,\calT_c) = \bigg\{1 + \frac{\phi(\bX_{c_0}|\calT_{c_0})\phi(\bX_{c_1}|\calT_{c_1})}{\phi(\bX_c|\calH_c)}\bigg\}^{-1}.
\ee
One can see that the ratio inside behaves as the dissimilarity between $c_0$ and $c_1$:
\be\label{eq:bhc_dis_1}
\sfd(c_0,c_1)\defeq \frac{\phi(\bX_{c_0}|\calT_{c_0})\phi(\bX_{c_1}|\calT_{c_1})}{\phi(\bX_c|\calH_c)}.
\ee
Now, building a binary tree follows {\small \textbf{Algorithm}~\ref{alg:greedy_ahc}} with the distance in (\ref{eq:bhc_dis_1}). 
Beside this, BHC has a scheme to determine the number of clusters. 
It was suggested in \citep{HellerKA2005icml} that the tree can be cut at points $p(\calH_c|\bX_c,\calT_c) < 0.5$.
It is equivalent to say that the we stop the algorithm if the minimum over $\sfd(c_0,c_1)$ is greater than 1.
Note that once the tree is cut, the result contains forests, each of which involves a cluster.

BHC is closely related to the marginal likelihood of DPM; actually, the prior $p(\calH_c)$ comes from
the predictive distribution of DP prior. Moreover, it was shown that computing $\phi(\bX_c|\calT_c)$ to build a tree naturally
induces a lower bound on the marginal likelihood of DPM, $\pdpm(\bX_c)$~\citep{HellerKA2005icml}:
\be
\frac{\Gamma(\alpha)}{\Gamma(\alpha+n)}\phi(\bX_c|\calT_c)\leq \pdpm(\bX_c).
\ee
Hence, in the perspective of the posterior inference algorithm for DPM, building tree in BHC is equivalent to computing the approximate
marginal likelihood. Also, cutting the tree at the level where $\sfd(c_0,c_1) > 1$ corresponds finding the MAP clustering of $\bX$.

In \citep{HellerKA2005icml}, the time complexity was claimed to be $O(n^2)$. 
However, this does not count the complexity required to find a pair with the smallest dissimilarity via sorting. 
For instance, with a sorting algorithm using priority queues, BHC requires $O(n^2\log n)$ in time.

The dissimilarity is very sensitive to tuning the hyperparameters involving the distribution over parametes$p(\btheta)$ 
required to compute $p(\bX_c|\calH_c)$.
An EM-like iterative algorithm was proposed in \citep{HellerKA2005icml} to tune the hyperparameters, but the repeated
execution of the algorithm is infeasible for large-scale data.

\subsection{Bregman Diverences and Exponential Families}
\label{subsec:brdiv_and_ef}

\begin{defn}
(\citep{Bregman67}) Let $\varphi$ be a strictly convex differentiable function defined on a convex set. 
Then, the Bregman divergence, $\sfB_\varphi(\bx,\by)$, is defined as
\be
\sfB_\varphi(\bx,\by) = \varphi(\bx)-\varphi(\by) - \ip{\bx-\by,\nabla\varphi(\by)}.
\ee
\end{defn}

Various divergences belong to the Bregman divergence.
For instance, Euclidean distance or KL divergence is Bregman divergence, when 
$\varphi(\bx)= \frac{1}{2}\bx^{\top} \bx$ or $\varphi(\bx) = \bx \log \bx$, respectively.

The exponential family distribution over $\bx\in\bbr^d$ with 
natural parameter $\btheta\in\bTheta$ is of the form:
\be
p(\bx|\btheta) = \exp \Big\{ \ip{t(\bx),\btheta} - \psi(\btheta) - h(\bx) \Big\},
\ee
where $t(\bx)$ is sufficient statistics, $\psi(\btheta)$ is a log-partition function, 
and $\exp\{- h(\bx)\}$ is a base distribution. 
We assume that $p(\bx|\btheta)$ is \emph{regular} ($\bTheta$ is open) and $t(\bx)$ is \emph{minimal} ($\nexists \ba \in\bbr^d$ s.t. $\forall \bx,\,\, \ip{\ba,t(\bx)} = \mathrm{const}$).
Let $\varphi(\bmu)$ be the convex conjugate of $\psi$:
\be
\varphi(\bmu) = \sup_{\theta\in\Theta} \Big\{\ip{\bmu,\btheta}-\psi(\btheta) \Big\}.
\ee
Then, the Bregman divergence and the exponential family has the following relationship:

\begin{thm}
\citep{BanerjeeA2005jmlr}
Let $\varphi(\cdot)$ be the conjugate function of $\psi(\cdot)$.
Let $\btheta$ be the natural parameter and $\bmu$ be the corresponding expectation parameter, i.e., 
$\bmu = \bbe[t(\bx)] = \nabla\psi(\btheta)$.
Then $p(\bx|\btheta)$ is uniquely expressed as
\be
p(\bx|\btheta) & = & \exp \Big\{-\sfB_\varphi(t(\bx),\bmu) \Big\} \nonumber \\
& & \exp \Big\{\varphi(t(\bx))-h(\bx) \Big\}.
\ee
\end{thm}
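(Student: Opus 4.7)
The plan is to prove the identity by direct algebraic manipulation, using Legendre--Fenchel duality between $\psi$ and $\varphi$. First I would invoke the standing assumptions that $p(\bx|\btheta)$ is regular and $t(\bx)$ is minimal. These conditions ensure that $\psi$ is strictly convex and differentiable on the open set $\bTheta$, so the Legendre transform $\varphi = \psi^*$ is well-defined, the map $\btheta \mapsto \bmu = \nabla\psi(\btheta)$ is a bijection onto the interior of the domain of $\varphi$, and its inverse is $\bmu \mapsto \nabla\varphi(\bmu)$. In particular, the dual relation
\be
\varphi(\bmu) + \psi(\btheta) = \ip{\bmu,\btheta}, \qquad \nabla\varphi(\bmu) = \btheta,
\ee
holds whenever $\bmu = \nabla\psi(\btheta)$.

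Next I would rewrite the log-partition function using the dual identity above as $\psi(\btheta) = \ip{\bmu,\btheta} - \varphi(\bmu)$, and substitute into the exponential-family form:
\be
\log p(\bx|\btheta) &=& \ip{t(\bx),\btheta} - \psi(\btheta) - h(\bx) \nonumber \\
&=& \ip{t(\bx)-\bmu,\btheta} + \varphi(\bmu) - h(\bx).
\ee
Now I would match this to the Bregman-divergence form. By definition,
\be
\sfB_\varphi(t(\bx),\bmu) = \varphi(t(\bx)) - \varphi(\bmu) - \ip{t(\bx)-\bmu,\nabla\varphi(\bmu)},
\ee
and the duality relation $\nabla\varphi(\bmu) = \btheta$ lets me replace $\nabla\varphi(\bmu)$ by $\btheta$. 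Rearranging gives
\be
\ip{t(\bx)-\bmu,\btheta} + \varphi(\bmu) = \varphi(t(\bx)) - \sfB_\varphi(t(\bx),\bmu).
\ee
Plugging this into the expression for $\log p(\bx|\btheta)$ yields the claimed identity
\be
p(\bx|\btheta) = \exp\{-\sfB_\varphi(t(\bx),\bmu)\}\exp\{\varphi(t(\bx)) - h(\bx)\}.
\ee

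The only non-routine step is justifying the duality $\nabla\varphi(\bmu) = \btheta$; the rest is pure substitution. That step is where regularity (openness of $\bTheta$) and minimality of $t(\bx)$ do the real work, since they guarantee strict convexity and smoothness of $\psi$ and hence the invertibility of $\nabla\psi$. Once that is in hand, the derivation is essentially a one-line rearrangement of the Fenchel--Young identity, so I would expect the main expository effort to lie in cleanly citing or stating the Legendre duality lemma rather than in the algebra itself.
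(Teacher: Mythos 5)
Your derivation is correct, and it is essentially the argument the paper relies on: the paper does not prove this theorem itself but imports it from \citet{BanerjeeA2005jmlr}, where the representation is obtained by exactly the substitution you use, namely the Fenchel--Young identity $\psi(\btheta)=\ip{\bmu,\btheta}-\varphi(\bmu)$ combined with $\nabla\varphi(\bmu)=\btheta$ to rewrite $\ip{t(\bx)-\bmu,\btheta}+\varphi(\bmu)$ as $\varphi(t(\bx))-\sfB_\varphi(t(\bx),\bmu)$. The one point worth flagging is the word ``uniquely'' in the statement: your algebra establishes the representation, whereas uniqueness rests on the bijectivity of $\nabla\psi$ between $\bTheta$ and the interior of $\mathrm{dom}\,\varphi$ --- which you correctly identify as the place where regularity of $\bTheta$ and minimality of $t(\bx)$ do the work, though a complete account would state that no other Bregman generator can produce the same family.
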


The conjugate prior for $\btheta$ has the form:
\be
p(\btheta|\nu,\btau) = \exp \Big\{ \ip{\btau,\btheta} - \nu\psi(\btheta) - \xi(\nu,\btau) \Big\}.
\ee
$p(\btheta|\nu,\btau)$ can also be expressed with the Bregman divergence:
\be
\lefteqn{p(\btheta|\nu,\btau) = \exp\{-\nu\sfB_\varphi(\btau/\nu,\bmu)\}} \nn
&& \times \exp\Big\{\nu\varphi(\btau/\nu)-\xi(\nu,\btau)\Big\}.
\ee

\subsection{Scaled Exponential Families}
\label{subsec:scaled_ef}

Let $\widetilde\btheta = \beta\btheta$, and $\widetilde\psi(\widetilde\btheta) = \beta\psi(\widetilde\btheta/\beta) = \beta\psi(\btheta)$.
The scaled exponential family with scale $\beta$ is defined as follows~\citep{JiangK2012nips}:
\be
p(\bx|\widetilde\btheta) &=& \exp \Big\{ \ip{t(\bx),\widetilde\btheta}-\widetilde\psi(\widetilde\btheta)-h_\beta(\bx) \Big\}\nn
&=&\exp \Big\{ \beta \ip{t(\bx),\btheta}-\beta\psi(\btheta)-h_\beta(\bx) \Big\}.
\ee
For this scaled distribution, the mean $\bmu$ remains the same, and the covariance $\bSigma$ becomes 
$\bSigma/\beta$ ~\citep{JiangK2012nips}.
Hence, the distribution is more concentrated around its mean. The scaled distribution in the Bregman divergence form is
\be
p(\bx|\widetilde\btheta) & = & \exp \Big\{-\beta\sfB_\varphi(t(\bx),\bmu) \Big\} \nonumber \\
& & \exp \Big\{\beta\varphi(t(\bx))-h_\beta(\bx) \Big\}.
\ee
According to $p(\bx|\widetilde\btheta)$, $p(\widetilde\btheta|\widetilde\btau,\widetilde\nu)$ is defined with $\widetilde\btau = \btau/\beta$, $\widetilde\nu = \nu/\beta$.
Actually, this yields the same prior as non-scaled distribution.
\be
p(\widetilde\btheta|\widetilde\nu,\widetilde\btau) &=& 
\exp\Big\{\ip{\widetilde\btheta,\widetilde\btau}-
\widetilde\nu\widetilde\psi(\widetilde\btheta)-\xi_\beta(\widetilde\nu,\widetilde\btau)\Big\}\nn
&=& \exp \Big\{ \ip{\btheta,\btau}-\nu\psi(\btheta)-\xi(\nu,\btau) \Big\}.
\ee

\section{MAIN RESULTS}
\label{sec:main}

We present the main contribution of this paper.
From now on, we assume that the likelihood and prior in Eq.~\eqref{eq:bhc_single} 
are scaled exponential families defined in Section \ref{subsec:scaled_ef}.

\subsection{Small-Variance Asymptotics for BHC}
\label{subsec:sva_bhc}

The dissimilarity in BHC can be rewritten as follows:
\be
\label{eq:bhc_dis_2}
\sfd(c_0,c_1) &=& \frac{\phi(\bX_{c_0}|\calT_{c_0})\phi(\bX_{c_1}|\calT_{c_1})}{\phi(\bX_c | \calH_c)}\nn
&=& \frac{\alpha\Gamma(|c_0|)\Gamma(|c_1|)p(\bX_{c_0}|\calH_{c_0})p(\bX_{c_1}|\calH_{c_1})}{\Gamma(|c|)p(\bX_c|\calH_c)}\nn
& \times & \Big\{ 1 + \sfd(c_{00},c_{01}) \Big\} \Big\{1+\sfd(c_{10},c_{11}) \Big\},
\ee
where $c_0 = c_{00 }\cup c_{01}$ and $c_1 = c_{10} \cup c_{11}$. 
We first analyze the term $p(\bX_c|\calH_c)$, as in \citep{JiangK2012nips}.
\be
\label{eq:integral}
\lefteqn{p(\bX_c|\calH_c) = \disint \bigg\{\prod_{i\in c} p(\bx_i|\widetilde\btheta)\bigg\} p(\widetilde\btheta|\nu,\btau)d\widetilde\btheta}\nn
&=&\beta^d\disint\exp \bigg\{ \bigg\langle \btheta, \btau + \beta\sum_{i\in c} t(\bx_i)\bigg\rangle - (\nu+\beta|c|)\psi(\btheta)\nn
& & -\sum_{i\in c} h_{\beta}(\bx_i) - \xi(\nu,\btau)\bigg\} d\btheta\nn
&=&\beta^d \exp\bigg\{(\nu+\beta|c|)\varphi(\bmu_c)-\sum_{i\in c} h_{\beta}(\bx_i) - \xi(\nu,\btau)\bigg\}\nn
& & \times \int \exp\{-(\nu+\beta|c|)\sfB_\varphi(\bmu_c,\bmu)\}d\btheta,
\ee
where
\be
\bmu_c \defeq \frac{\btau+\beta\sum_{i\in c} t(\bx_i)}{\nu+\beta|c|}.
\ee
Note that $\bmu = \nabla\psi(\btheta)$ is a function of $\btheta$. The term inside the integral of Eq.~\eqref{eq:integral} has a local minimum
at $\bmu = \bmu_c$, and thus can be approximated by Laplace's method:
\be
\label{eq:integral_approx}
&= \beta^d \exp\bigg\{ (\nu+\beta |c|)\varphi(\bmu_c) - \displaystyle{\sum_{i\in c}} h_{\beta}(\bx_i) - \xi(\nu,\btau)\bigg\}\nn
&  \bigg(\dfrac{2\pi}{\nu+\beta|c|}\bigg)^{\frac{d}{2}}\bigg|\dfrac{\partial^2\sfB_\varphi(\bmu_c,\bmu)}{\partial\btheta\partial\btheta\tr}\bigg|^{-\frac{1}{2}}_{\bmu=\bmu_c}\Big\{1+O(\beta^{-1})\Big\}.
\ee
It follows from this result that, as $\beta \rightarrow \infty$,  
the asymptotic limit of dissimilarity $\sfd(c_0,c_1)$ in \eqref{eq:bhc_dis_2} is given by
{\small
\bee
\label{eq:lim}
\lefteqn{\lim_{\beta\to\infty}\frac{\alpha\Gamma(|c_0|)\Gamma(|c_1|)p(\bX_{c_0}|\calH_{c_0})p(\bX_{c_1}|\calH_{c_1})}{\Gamma(|c|)p(\bX_c|\calH_c)}}\\
&\propto& \hspace*{-.1in} \lim_{\beta\to\infty}\alpha\beta^{\frac{d}{2}}\exp \Bigl\{ \beta(|c_0|\varphi(\bmu_{c_0}) 
+ |c_1|\varphi(\bmu_{c_1}) - |c|\varphi(\bmu_c)) \Bigr\}\nonumber,
\eee}
Let $\alpha = \beta^{-\frac{d}{2}}\exp(-\beta\lambda)$, then we have
\bee
= \lim_{\beta\to\infty}\exp \Big\{\beta(|c_0|\varphi(\bmu_{c_0}) + |c_1|\varphi(\bmu_{c_1}) - |c|\varphi(\bmu_c) - \lambda) \Big\}.
\eee
As $\beta \to \infty$, the term inside the exponent converges to
\be
|c_0|\varphi(\bar t_{c_0}) + |c_1|\varphi(\bar t_{c_1}) - |c|\varphi(\bar t_{c})-\lambda,
\ee
where
\be
\bar t_c \defeq \frac{1}{|c|} \sum_{i\in c} t(\bx_i),
\ee
and this is the average of sufficient statistics for cluster $c$. 
With this result, we define a new dissimilarity $\sfd_\star(c_0,c_1)$ as
\be
\label{eq:bhc_dis_star}
\sfd_\star(c_0,c_1) \hspace*{-.1in} & \defeq &  \hspace*{-.1in} |c_0|\varphi(\bar t_{c_0}) + |c_1|\varphi(\bar t_{c_1}) - |c|\varphi(\bar t_{c}) \nn
\hspace*{-.1in} &=& \hspace*{-.1in} |c_0|\varphi(\bar t_{c_0}) + |c_1|\varphi(\bar t_{c_1})\nn
\hspace*{-.1in} & - & \hspace*{-.1in} (|c_0|+|c_1|)\varphi\bigg(\frac{|c_0|\bar t_{c_0}+|c_1|\bar t_{c_1}}{|c_0|+|c_1|}\bigg).
\ee
Note that $\sfd_\star(\cdot,\cdot)$ is always positive since $\varphi$ is convex. 
If $\sfd_\star(c_0,c_1) \geq \lambda$, the limit Eq.~\eqref{eq:lim} diverges to $\infty$,
and converges to zero otherwise. When $|c|=1$, Eq.~\eqref{eq:lim} is the same as the limit of the dissimilarity $\sfd(c_0,c_1)$, 
and thus the dissimilarity diverges when $\sfd_\star(c_0,c_1) \geq \lambda$ and converges otherwise. When $|c|>1$, assume that the dissimilarities of 
children $\sfd(c_{00},c_{01})$ and $\sfd(c_{10},c_{11})$ converges to zero. From Eq.~\eqref{eq:bhc_dis_2}, we can easily see that
$\sfd(c_0,c_1)$ converges only if $\sfd_\star(c_0,c_1) < \lambda$. In summary, 
\be\label{eq:bhc_asymp_thres}
\lim_{\beta\to\infty} \sfd(c_0,c_1) = \bigg\{ \begin{array}{ll} 0 & \textrm{ if } \sfd_\star(c_0,c_1) < \lambda,\\ \infty & \textrm { otherwise.}\end{array}.
\ee
In similar way, we can also prove the following:
\be\label{eq:bhc_asymp_compare}
\lim_{\beta\to\infty} \frac{\sfd(c_0,c_1)}{\sfd(c_2,c_3)} = \bigg\{ \begin{array}{ll} 0 & \textrm{ if } \sfd_\star(c_0,c_1) < \sfd_\star(c_2,c_3),\\ \infty & \textrm { otherwise.}\end{array},
\ee
which means that comparing two dissimilarities in original BHC is equivalent to comparing the new dissimilarities $\sfd_\star(\cdot,\cdot)$,
and we can choose the next pair to merge by comparing $\sfd_\star(\cdot,\cdot)$ instead of $\sfd(\cdot,\cdot)$.

With Eqs.~\eqref{eq:bhc_asymp_thres} and \eqref{eq:bhc_asymp_compare}, 
we conclude that when $\beta\to\infty$, BHC reduces to {\textbf{Algorithm~\ref{alg:greedy_ahc}}}
with dissimilarity measure $\sfd_\star(\cdot,\cdot)$ and threshold $\lambda$, 
where the algorithm terminates when the minimum $\sfd_\star(\cdot,\cdot)$ exceeds $\lambda$.

On the other hand, a simple calculation yields
\bee
\sfd_\star(c_0,c_1) = |c_0| \sfB_\varphi(\bar t_{c_0}, \bar t_c) + |c_1|\sfB_\varphi(\bar t_{c_1},\bar t_c),
\eee
which is exactly same as the dissimilarity proposed in \citep{TelgarskyM2012icml}.
Due to the close relationship between exponential family and the Bregman divergence,
the dissimilarities derived from two different perspective has the same form.

As an example, assume that $p(\bx|\btheta) = \calN(\bx|\bmu,\sigma^2\bI)$ and $p(\bmu) = \calN(\bmu|0,\rho^2\bI)$. We have $\varphi(\bx) = \norm{\bx}^2/(2\sigma^2)$ and
\be\label{eq:ward}
\sfd_\star(c_0,c_1) = \frac{|c_0||c_1|\norm{\bx_{c_0}-\bx_{c_1}}^2}{2\sigma^2(|c_0|+|c_1|)},
\ee
which is same as the Ward's merge cost~\citep{WardJH63jasa}, except for the constant $1/(2\sigma^2)$. Other examples
can be found in \citep{BanerjeeA2005jmlr}.

Note that $\sfd_\star(\cdot,\cdot)$ does not need hyperparameter tunings, since
the effect of prior $p(\btheta)$ is ignored as $\beta\to 0$. This provides a great advantage
over BHC which is sensitive to the hyperparameter settings.

\textbf{Smoothing}: In some particular choice of $\varphi$, the singleton clusters may have degenerate values~\citep{TelgarskyM2012icml}. 
For example, when $p(\bx|\btheta) = \mathrm{Mult}(\bx|m,q)$, the function $\varphi(\bx) = \sum_{j=1}^d x_j\log(x_j/m)$ has degenerate values
when $x_j = 0$. To handle this, we use the smoothing strategy proposed in \citep{TelgarskyM2012icml}; instead of the
original function $\varphi(\bx)$, we use the smoothed functions $\varphi_0(\bx)$ and $\varphi_1(\bx)$ defined as follows:
\be
&\varphi_0(\bx) \defeq \varphi((1-\alpha)\bx + \alpha \bgamma), \\
&\varphi_1(\bx) \defeq \varphi(\bx + \alpha \bgamma),
\ee
where $\alpha\in(0,1)$ be arbitrary constant and $\bgamma$ must in the relative interior of the domain of $\varphi$.
In general, we use $\varphi_0(\bx)$ as a smoothed function, but we can also use $\varphi_1(\bx)$ when the domain of $\varphi$ is a convex cone.

\textbf{Heuristics for choosing $\lambda$}: As in \citep{KulisB2012icml}, we choose the threshold value $\lambda$.
Fortunately, we found that the clustering accuracy was not extremely sensitive to the choice of $\lambda$; merely
selecting the scale of $\lambda$ could result in reasonable accuracy. There can be many simple heuristics,
and the one we found effective is to use the $k$-means clustering. With the very rough guess on the desired number
of clusters $\widetilde k$, we first run the $k$-means clustering (with Euclidean distance) with $k = a\widetilde k$ (we fixed
$a = 4$ for all experiments). Then, $\lambda$ was set to the average value of dissimilarities $\sfd_\star(\cdot,\cdot)$
between the all pair of $k$ centers.

\subsection{Reducibility of $\sfd_\star(\cdot,\cdot)$}
\label{subsec:reducible}

The relaxed BHC with small-variance asymptotics still has the same complexities to BHC.
 If we can show that $\sfd_\star(\cdot,\cdot)$ is reducible, we can reduce the complexities by adapting the nearest neighbor chain algorithm.
Unfortunately, $\sfd_\star(\cdot,\cdot)$ is not reducible in general (one can easily find counter-examples for some distributions). 
However, we argue that $\sfd_\star(\cdot,\cdot)$ is reducible in many cases,
and thus applying the nearest neighbor chain algorithm as if $\sfd_\star(\cdot,\cdot)$ is reducible
does not degrades the clustering accuracy. In this section, we show the reason by analyzing  $\sfd_\star(\cdot,\cdot)$.

At first, we investigate a term inside the dissimilarity:
\be
f(\bar t_c) \defeq |c|\varphi(\bar t_c).
\ee
The second-order Taylor expansion of this function around the mean $\bmu$ yields:
\be
\label{eq:taylor}
&f(\bar t_c) = |c|\varphi(\bmu) + |c| \varphi^{(1)}(\bmu)\tr (\bar t_c-\bmu)\nn
&  + \Delta_\varphi(\bar t_c,\bmu) + \epsilon_\varphi(\bar t_c,\bmu),
\ee
where $\varphi^{(k)}$ is the $k$th order derivative of $\varphi$, and
\be
\Delta_\varphi(\bar t_c) \defeq \frac{|c|}{2}(\bar t_c-\bmu)\tr\varphi^{(2)}(\bmu)(\bar t_c-\bmu), \\
\epsilon_\varphi(\bar t_c) \defeq |c|\sum_{|\balpha|=3} \frac{\partial^{\balpha} \varphi(\bnu)}{\balpha!}(\bar t_c-\bmu)^{\balpha}.
\ee
Here, $\balpha$ is the multi-index notation, and $\bnu = \bmu + k(\bar t_c-\bmu)$ for some $k\in(0,1)$. The term $\Delta_\varphi(\bar t_c)$ plays an important role in analyzing the
reducibility of $\sfd_\star(\cdot,\cdot)$. To bound the error term $\epsilon_\varphi(\bar t_c)$,
we assume that $|\varphi^{(3)}|\leq M$\footnote{This assumption holds for the most of distributions we will discuss (if properly smoothed), but not holds in general.}. As earlier, assume that $\bar t_c$ is a average of $|c|$ observations generated from the same
scaled-exponential family distribution:
\be
\bx_1,\dots, \bx_n  \isim p(\cdot | \beta,\btheta),\quad \bar t_c = \frac{1}{|c|}\sum_{i\in c} t(\bx_i).
\ee
By the property of the log-partition function of the exponential family distribution, we get the following results:
\be\label{eq:expected_error}
\bbe[\epsilon_\varphi(\bar t_c)] = \frac{1}{\beta^2 |c|}\sum_{|\balpha|=3} \frac{\partial^{\balpha}\varphi(\bnu)\partial^{\balpha} \psi(\btheta)}{\balpha!}.
\ee
One can see that the expected error converges to zero as $\beta^2|c|\to\infty$. Also, it can be shown that
the expectation of the ratio of two terms converges to zero as $\beta \to \infty$:
\be
\lim_{\beta\to\infty} \bbe\bigg[\frac{\epsilon_\varphi(\bar t_c)}{\Delta_\varphi(\bar t_c)}\bigg] = 0,
\ee
which means that $\Delta_\varphi(\bar t_c)$ asymptotically dominates $\epsilon_\varphi(\bar t_c)$ (detailed derivations are given in the supplementary material).
Hence, we can safely approximate $f(\bar t_c)$ up to second order term.

Now, let $c_0$ and $c_1$ be clusters belong to the same super-cluster (i.e. $\bX_{c_0}$ and $\bX_{c_1}$ were
generated from the same mean vector $\bmu$). We don't need to investigate the case where the pair belong to a different cluster,
since then they will not be merged anyway $(\sfd_\star(\cdot,\cdot) \geq \lambda)$ in our algorithm. By the independence, $\bbe[\bar t_{c_0}] = \bbe[\bar t_{c_1}] = \bbe[\bar t_{c_0\cup c_1}] = \bmu$. Applying the approximation~\eqref{eq:taylor}, we have
\be
\lefteqn{\sfd_\star(c_0,c_1) \approx \Delta_\varphi(\bar t_{c_0}) + \Delta_\varphi(\bar t_{c_1}) - \Delta_\varphi(\bar t_{c_0\cup c_1})  }\nn
& = \frac{|c_0||c_1|}{2(|c_0|+|c_1|)}(\bar t_{c_0}-\bar t_{c_1})\tr \varphi^{(2)}(\bmu) (\bar t_{c_0}-\bar t_{c_1}).
\ee
This approximation, which we will denote as $\widetilde\sfd_\star(c_0,c_1)$, is a generalization of the Ward's cost~\eqref{eq:ward} from Euclidean distance to Mahalanobis distance with matrix $\varphi^{(2)}(\bmu)$ (note that this approximation is exact for the spherical Gaussian case). 
More importantly, $\widetilde\sfd_\star(c_0,c_1)$ is reducible.

\begin{thm}
\bee
&\widetilde\sfd_\star(c_0,c_1) \leq \min \Big\{\widetilde\sfd_\star(c_0,c_2),\widetilde\sfd_\star(c_1,c_2) \Big\}\nn
& \Rightarrow \min \Big\{\widetilde\sfd_\star(c_0,c_2),\widetilde\sfd_\star(c_1,c_2) \Big\} \leq \widetilde\sfd_\star(c_0\cup c_1, c_2).
\eee
\end{thm}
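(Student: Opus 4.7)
The plan is to recognize that $\widetilde\sfd_\star$ is exactly Ward's distance, only with the squared Euclidean norm replaced by the semi-norm induced by $\bA := \varphi^{(2)}(\bmu)$. Since $\varphi$ is convex, $\bA$ is positive semidefinite, so $\|v\|_\bA^2 := v^\top \bA v$ is a genuine PSD quadratic form and $\widetilde\sfd_\star(a,b) = \tfrac{|a||b|}{2(|a|+|b|)}\|\bar t_a-\bar t_b\|_\bA^2 \geq 0$. The classical proof that Ward's linkage is reducible is purely algebraic and depends only on (i) the centroid update rule and (ii) the polarization identity, both of which transfer verbatim to a PSD semi-norm. So the proof will essentially mimic that classical argument.

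First I will derive a Lance--Williams style update formula. Using the centroid identity $\bar t_{c_0\cup c_1}=(n_0 \bar t_{c_0}+n_1 \bar t_{c_1})/(n_0+n_1)$ with $n_i := |c_i|$, I expand $\|\bar t_{c_0\cup c_1}-\bar t_{c_2}\|_\bA^2$. Applying the polarization identity $2\langle u,v\rangle_\bA=\|u\|_\bA^2+\|v\|_\bA^2-\|u-v\|_\bA^2$ to the cross term in the expansion and collecting, a routine calculation yields
\begin{equation*}
\widetilde\sfd_\star(c_0\cup c_1,c_2)=\alpha_0 \widetilde\sfd_\star(c_0,c_2)+\alpha_1 \widetilde\sfd_\star(c_1,c_2)+\beta \widetilde\sfd_\star(c_0,c_1),
\end{equation*}
with $\alpha_0=\tfrac{n_0+n_2}{n_0+n_1+n_2}$, $\alpha_1=\tfrac{n_1+n_2}{n_0+n_1+n_2}$, and $\beta=-\tfrac{n_2}{n_0+n_1+n_2}$. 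Note $\alpha_0,\alpha_1\geq 0$, $\beta\leq 0$, and crucially $\alpha_0+\alpha_1+\beta=1$.

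With this identity in hand, reducibility is immediate. Setting $m := \min\{\widetilde\sfd_\star(c_0,c_2),\widetilde\sfd_\star(c_1,c_2)\}$ and assuming the hypothesis $\widetilde\sfd_\star(c_0,c_1)\leq m$, we use $\alpha_0,\alpha_1\geq 0$ and $\widetilde\sfd_\star(c_i,c_2)\geq m$ for $i\in\{0,1\}$, together with $\beta\leq 0$ and $\widetilde\sfd_\star(c_0,c_1)\leq m \Rightarrow \beta\widetilde\sfd_\star(c_0,c_1)\geq \beta m$, to bound
\begin{equation*}
\widetilde\sfd_\star(c_0\cup c_1,c_2)\geq (\alpha_0+\alpha_1+\beta)\,m = m,
\end{equation*}
which is the desired conclusion.

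There is no real obstacle here beyond bookkeeping. The only subtlety worth flagging is that $\beta$ is strictly negative, so the direction of the inequality $\widetilde\sfd_\star(c_0,c_1)\leq m$ is essential; dropping that term would not suffice. Everything else is mechanical expansion, and the PSD property of $\bA$ (coming for free from convexity of $\varphi$) guarantees that all quantities entering the argument are non-negative so the inequalities can be chained without sign issues.
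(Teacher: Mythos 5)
Your proposal is correct and follows essentially the same route as the paper: both rest on the Lance--Williams update formula $\widetilde\sfd_\star(c_0\cup c_1,c_2)=\alpha_0\widetilde\sfd_\star(c_0,c_2)+\alpha_1\widetilde\sfd_\star(c_1,c_2)+\beta\widetilde\sfd_\star(c_0,c_1)$ with coefficients summing to one, followed by the same chaining of inequalities. The only difference is that you sketch the derivation of the update formula (centroid identity plus polarization under the PSD form $\varphi^{(2)}(\bmu)$), whereas the paper simply invokes it as the known Ward identity.
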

\begin{proof}
For the Ward's cost, the following Lance-Williams update formula~\citep{LanceGN67tcj} holds for $\widetilde\sfd_\star(\cdot,\cdot)$:
\be
\lefteqn{\widetilde\sfd_\star(c_0\cup c_1,c_2)=\frac{(|c_0|+|c_2|)\widetilde\sfd_\star(c_0,c_2)}{|c_0|+|c_1|+|c_2|}}\nn
& &  + \frac{(|c_1|+|c_2|)\widetilde\sfd_\star(c_1,c_2)-|c_2|\widetilde\sfd_\star(c_0,c_1)}{|c_0|+|c_1|+|c_2|}.
\ee
Hence, by the assumption, we get
\bee
\widetilde\sfd_\star(c_0\cup c_1,c_2) \geq \min \Big\{\widetilde\sfd_\star(c_0,c_2),\widetilde\sfd_\star(c_1,c_2) \Big\}.
\eee
\end{proof}

\begin{algorithm}[t!]
\small
\caption{\small Nearest neighbor chain algorithm for BHC with small-variance asymptotics}
\label{alg:nnchain_ahc_thres}
\begin{algorithmic}[1]
\REQUIRE $\bX = \{\bx_i\}_{i=1}^n$, $\sfd_\star(\cdot,\cdot)$, $\lambda$.
\ENSURE A clustering $C$.
\STATE Set $R = [n]$ and $C = \varnothing$.
\WHILE{$R \neq \varnothing$}
\STATE Pick any $c_0\in R$.
\STATE Build a chain $c_1 = \mathrm{nn}(c_0), c_2 = \mathrm{nn}(c_1),\dots$,
where $\mathrm{nn}(c) \defeq \argmin_{c'} \sfd_star(c,c')$.
Extend the chain until $c_i = \mathrm{nn}(c_{i-1})$ and $c_{i-1} = \mathrm{nn}(c_i)$.
\STATE Remove $c_i$ and $c_{i-1}$ from $R$.
\IF{$\sfd_\star(c_{i-1},c_i)<\lambda$}
\STATE Add $c = c_i\cup c_{i-1}$ to $R$.
\STATE If $i \geq 2$, go to line 3 and extend the chain from $c_{i-2}$. Otherwise, go to line 2.
\ELSE
\STATE Add $c_i$ and $c_{i-1}$ to $C$.
\ENDIF
\ENDWHILE
\end{algorithmic} 
\end{algorithm}

As a result, the dissimilarity $\sfd_\star(\cdot,\cdot)$ is reducible provided that the Taylor's approximation \eqref{eq:taylor} is accurate.
In such a case, one can apply the nearest-neighbor chain algorithm with $\sfd_\star(\cdot,\cdot)$,
treating 
as if it is reducible to build a binary tree in $O(n^2)$ time and $O(n)$ space. Unlike
\textbf{Algorithm~\ref{alg:nnchain_ahc}}, we have a threshold $\lambda$ to determine the number of clusters, 
and we present a slightly revised algorithm~({\textbf{Algorithm~\ref{alg:nnchain_ahc_thres}}). Note again
that the revised algorithm generates forests instead of trees.

\begin{table}
	\small
	\centering
	\caption{Average value of the exact dissimilarity $\sfd_\star$, approximate dissimilarity $\widetilde\sfd_\star$
		and relative error, and number of not-reducible case among 100,000 trials.}
	\begin{tabular}{|c|c|c|c|c|}
		\hline		& $\bbe[\sfd_\star]$  & $\bbe[\widetilde{\sfd}_\star]$ & \parbox{1.2cm}{average relative error ($\%$)} & \parbox{1.2cm}{ not\\ reducible $/ 100k$ }  \\ 
		\hline Poisson & 0.501 & 0.500 & 1.577 & 0 \\ 
		Multinomial & 3.594  & 3.642 & 4.675 & 87 \\ 
		Gaussian & 8.739  & 8.717 & 7.716 & 48 \\ 
		\hline
	\end{tabular} 
		\label{table:reducibility}
\end{table}

\begin{figure*}
	\centering
	\includegraphics[width = 0.4\linewidth]{./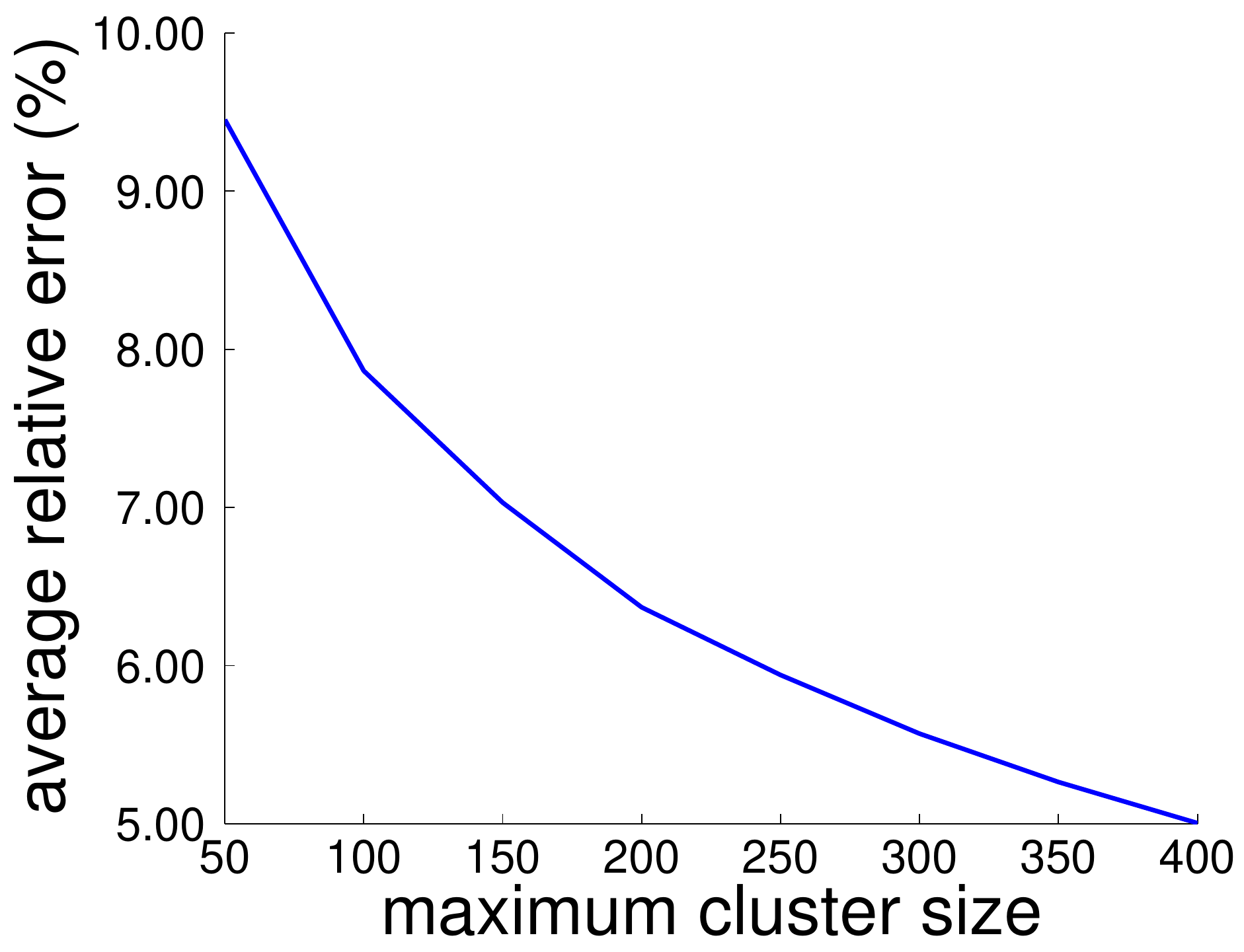}
	\includegraphics[width = 0.4\linewidth]{./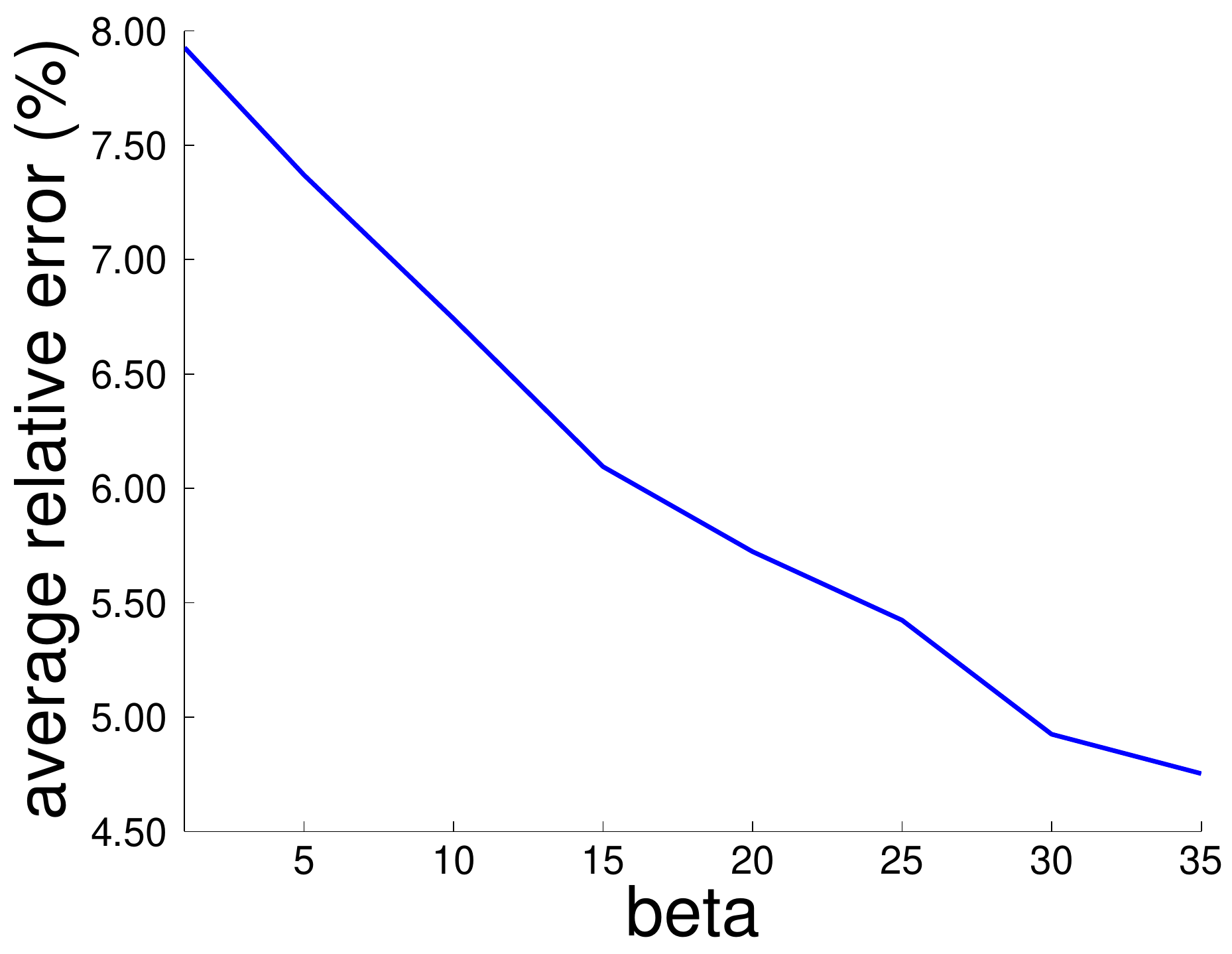}
	\caption{Average relative error vs maximum cluster size and scale factor $\beta$.}
	\label{fig:reduciblllity}
\end{figure*}

\begin{table*}
	\small
	\centering
	\caption{Average adjusted Rand index values for randomly generated datasets. Best ones are marked as bold face.}
	\begin{tabular}{|c|c|c|c|c|c|c|}
		\hline  & Poisson ($k$) & Poisson ($2k$) & multinomial ($k$) & multinomial ($2k$) & Gaussian ($k$) & Gaussian ($2k$) \\ 
		\hline single linkage & 0.091 (0.088) & 0.015 (0.030) & 0.000 (0.000) & 0.000 (0.000) & 0.270 (0.280)  & 0.057 (0.055) \\ 
		complete linakge & 0.381 (0.091) & 0.263 (0.059) & 0.266 (0.144) & 0.090 (0.025) & 0.565 (0.166) & 0.475 (0.141) \\ 
		Ward's method & 0.465 (0.119) & 0.273 (0.049) & 0.770 (0.067) & 0.564 (0.055) & 0.779 (0.145) & 0.763 (0.122) \\ 
		RBHC-greedy & \textbf{0.469} (0.112) & \textbf{0.290} (0.056) & 0.870 (0.046) & 0.733 (0.028) & \textbf{0.875} (0.087) & \textbf{0.883} (0.067) \\ 
		RBHC-nnca & \textbf{0.469} (0.112)  & \textbf{0.290} (0.056) & 0.865 (0.045) & 0.736 (0.040) & \textbf{0.875} (0.087) & \textbf{0.883} (0.067) \\ 
		BHC & 0.265 (0.080) & 0.134 (0.052) & \textbf{0.907} (0.069) & \textbf{0.894} (0.044) & 0.863 (0.109) & 0.860 (0.108) \\
		\hline 
	\end{tabular} 
	\label{table:synthetic_ari}
\end{table*}

\section{EXPERIMENTS}
\label{sec:experiments}

\subsection{Experiments on Synthetic Data}
\label{subsec:synthetic_experiments}

{\textbf{Testing the reducibility of $\sfd_\star(\cdot,\cdot)$}}: 
We tested the reducibility of $\sfd_\star(\cdot,\cdot)$ empirically. 
We repeatedly generated the three clusters $c_0, c_1$ and $c_2$ from
the exponential family distributions, and counted the number of cases where the dissimilarities between those
clusters are not reducible. We also measured the average value of the relative error to support our arguments 
in Section~\ref{subsec:reducible}. 
We tested three distributions; Poisson, multinomial and Gaussian.
At each iteration, we first sampled the size of the clusters $|c_0|$, $|c_1|$ and $|c_2|$
from $\mathrm{Unif}([20,100])$. Then we sampled the three clusters from one of the three distributions, 
and computed $\sfd_\star(c_0,c_1), \sfd_\star(c_1,c_2)$ and $\sfd_\star(c_0, c_2)$. We then first checked
whether these three values satisfy the reducibility condition~\eqref{eq:reducibility} (for example,
if $\sfd_\star(c_0,c_2)$ is the smallest, we checked if $\sfd_\star(c_0\cup c_2, c_1) \geq \min\{ \sfd_\star(c_0,c_1),
\sfd_\star(c_2, c_1)\}$). Then, for $\sfd_\star(c_0,c_1)$, we computed the approximate value $\widetilde\sfd_\star(c_0,c_1)$
and measured the relative error
\be
2\times\left|\frac{ \sfd_\star(c_0,c_1)-\widetilde{\sfd}_\star(c_0,c_1) }{\sfd_\star(c_0,c_1)+\widetilde{\sfd}_\star(c_0,c_1)}\right|.
\ee
We repeated this process for $100,000$ times for the three distributions and measured the average values.
For Poisson distribution, we sampled the mean $\rho \sim\mathrm{Gamma}(2, 0.05)$ and sampled the
data from $\mathrm{Poisson}(\rho)$. We smoothed the function $\varphi(\bx) = \bx\log(\bx)-\bx$ as $\varphi(\bx+0.01)$
to prevent degenerate function values. For multinomial distribution, we tested the case where the dimension $d$ is $10$
and the number of trials $m$ is 5. We sampled the parameter $\bq\sim\mathrm{Dir}(5\cdot\mathbf{1}_{d})$ where $\mathbf{1}_d$
is $d$-dimensional one vector, and sampled the data from $\mathrm{Mult}(\bq)$. We smoothed the function $\varphi(\bx) = \sum_{j=1}^d x_j \log(x_j/m)$ as $\varphi(0.9\bx + 0.1 m \mathbf{1}_d / d)$. For Gaussian, we
tested with $d=10$, and sampled the mean and covariance $\bmu,\bSigma \sim \calN(0,(0.08\cdot\bLambda)^{-1})\calW (d+2, \bPsi)$ ($\bPsi = \bA \bA\tr + d\bI$ where $\bA$ was sampled from unit Gaussian). We smoothed
the function $\varphi(\bx,\bX) = - \frac{1}{2}\log\det(\bX-\bx\bx\tr)$ as $-\frac{1}{2}\log\det(\bX-\bx\bx\tr + 0.01\bI)$.

The result is summarized in Table~\ref{table:reducibility}. the generated dissimilarities were reducible in most case, as expected. 
The relative error was small, which supports our arguments of the reason why $\sfd_\star(\cdot,\cdot)$ is reducible with high probability. 
We also measured the change of average relative error by controlling two factors; the maximum cluster size $n_{\mathrm{max}}$ 
and variance scale factor $\beta$. We plotted the average relative error of Gaussian distribution by changing those two factors, 
and the relative error decreased as predicted (Figure~\ref{fig:reduciblllity}).

\textbf{Clustering synthetic data}: We evaluated the clustering accuracies of original BHC (BHC),
BHC in small-variance limit with greedy algorithm (RBHC-greedy), BHC in small-variance limit with nearest neighbor chain
method (RBHC-nnca), single linkage, complete linkage, and Ward's method. We generated the datasets from Poisson, 
multinomial and Gaussian distribution. We tested two types of data; 1,000 elements with 6 clusters and 2,000 elements with 12 clusters.
For Poisson distribution, each mixture component was generated from $\mathrm{Poisson}(\rho)$ with $\rho\sim\mathrm{Gamma}(2,0.05)$ 
for both datasets. For multinomial distribution, we set $d=20$ and $m=10$
for 1,000 elements dataset, and set $d=40$ and $m=10$ for 2,000 elements datasets. For both dataset, we sampled
the parameter $\bq\sim\mathrm{Dir}(0.5\cdot\mathbf{1}_d)$. For Gaussian case, we set $d=3$ for 1,000 elements dataset
and set $d=6$ for 2,000 elements dataset. We sampled the parameters from $\calN(0,(0.08\cdot\Lambda)^{-1})\calW(\bPsi,6)$ 
for 1,000 elements, and from $\calN(0,(0.2\cdot\Lambda)^{-1})\calW(\bPsi,9)$ for 2,000 elements. For each distribution
and type (1,000 or 2,000), we generated 10 datasets for each type and measured the average clustering accuracies.

We evaluated the clustering accuracy using the adjusted Rand index~\citep{HubertL85joc}. For traditional
agglomerative clustering algorithms, we assumed that we know the true number of clusters $k$ and cut the tree at corresponding level. 
For RBHC-greedy and RBHC-nnca, we selected the threshold $\lambda$ with the heuristics described in Section~\ref{subsec:sva_bhc}.  
For original BHC, we have to carefully tune the hyperparemters, and the accuracy was very sensitive to this setting. 
In the case of Poisson distribution where $\rho \sim\mathrm{Gamma}(a, b)$, we have to tune two 
hyperparameters $\{a, b\}$. For multinomial case where $\bq \sim \mathrm{Dir}(\balpha)$, we set
$\balpha = \alpha \mathbf{1}_d$ and tuned $\alpha$. 
For Gaussian case where $(\bmu,\bSigma)\sim \calN(\bm, (r\bLambda)^{-1})\calW(\nu,\bPsi^{-1})$, 
we have four hyperparameters $\{\bm, r, \nu, \bPsi\}$. We set $\bm$ to be the empirical mean of $\bX$ 
and fixed $r=0.1$ and $\nu = d + 6$. We set $\bPsi = k \bS$ where $\bS$ is the empirical covariance of $\bX$ 
and controlled $k$ according to the dimension and the size of the data. The result is summarized in Table~\ref{table:synthetic_ari}.

The accuracies of RBHC-greedy and RBHC-nnca were best for most of the cases, and the accuracies of the two methods
were almost identical expect for the multinomial distribution. BHC was best for the multinomial case
where the hyperparameter tuning was relatively easy, but showed poor performance in Poisson case (we failed
to find the best hyperparameter setting in that case). Hence, it would be a good choice to use
RBHC-greedy or RBHC-nnca which do not need careful hyperparameter tuning, and RBHC-nnca may be the best
choice considering its space and time complexity compared to RBHC-greedy.

\subsection{Experiments on Real Data}
\label{subsec:real_experiments}

We tested the agglomerative clustering algorithms on two types of real-world data. The first one
was a subset of MNIST digit database~\citep{LeCunY98procieee}. We scaled down the original $28\times 28$
to $7\times 7$ and vectorized each image to be $\bbr^{49}$ vector. Then we sampled 3,000 images
from the classes $0, 3, 7$ and $9$. We clustered this dataset with Gaussian asssumption.
The second one was visual-word data extracted from Caltech101 database~\citep{FeiFeiL2004cvprw}.
We sampled 2,033 images from "Airplane", "Mortorbikes" and "Faces-easy" classes, and extracted
SIFT features for image patches. Then we quantized those features into 1,000 visual words. We clustered
the data with multinomial assumption. Table~\ref{table:real_ari} shows the ARI values of agglomerative
clustering algorithms. As in the synthetic experiments, the accuracy RBHC-greedy and RBHC-nnca were identical,
and outperformed the traditional agglomerative clustering algorithms. BHC was best for Caltech101, where
the multinomial distribution with easy hyperparameter tuning was assumed. However, BHC was even worse
than Ward's method for MNIST case, where we failed to tune $49\times 49$ matrix $\bPsi$.

\begin{table}[ht!]
	\small
	\centering
	\caption{Average adjusted Rand index values for MNIST and Caltech101 datasets. Best ones are marked as bold face.}
	\begin{tabular}{|c|c|c|c|c|c|c|}
		\hline  & MNIST & Caltech101\\ 
		\hline single linkage & 0.000 & 0.000 \\
		complete linakge & 0.187  & 0.000  \\ 
		Ward's method & 0.485 & 0.465  \\ 
		RBHC-greedy & \textbf{0.637}  & 0.560 \\ 
		RBHC-nnca & \textbf{0.637}  & 0.560  \\ 
		BHC & 0.253 & \textbf{0.646} \\
		\hline 
	\end{tabular} 
	\label{table:real_ari}
\end{table}

\section{CONCLUSIONS}
\label{sec:conclusions}

In this paper we have presented a non-probabilistic counterpart of BHC, referred to as RBHC, using a small variance relaxation
when underlying likelihoods are assumed to be conjugate exponential families.
In contrast to the original BHC, RBHC does not requires careful tuning of hyperparameters.
We have also shown that the dissimilarity measure emerged in RBHC is
reducible with high probability, so that the nearest neighbor chain algorithm was used to
speed up the RBHC and to reduce the space complexity, leading to RBHC-nnca.
Experiments on both synthetic and real-world datasets demonstrated the validity of RBHC.

% /* acknowledgments */
\medskip
{\bf Acknowledgements}: 
This work was supported by National Research Foundation (NRF) of Korea (NRF-2013R1A2A2A01067464)
and  the IT R\&D Program of MSIP/IITP (14-824-09-014, Machine Learning Center).

% /* references */
\bibliographystyle{abbrvnat}
%\bibliography{/users/seungjin/pub/bib/sjc}
\bibliography{sjc}

\newpage
\onecolumn
\appendix
\section{Detailed Derivation in Section \ref{subsec:reducible}}

For $\bX_c = \{\bx_i\}_{i\in c} \isim p(\bx|\beta, \btheta)$, we have
\be
p(\bar t_c|\beta,\btheta) = \exp\bigg\{ \beta|c|\ip{\bar{t}_c,\btheta} - \beta |c| \psi(\btheta)
- \sum_{i\in c} h_\beta(\bx_i)\bigg\}.
\ee
For notational simplicity, we let $\bar t_c = \by$ from now.
By the normalization property, 
\be
\beta |c|\psi(\btheta) = \log\int \exp\bigg\{  \beta|c|\ip{\by,\btheta}
- \sum_{i\in c} h_\beta(\bx_i)\bigg\}  d\by.
\ee
Differentiating both sides by $\btheta$ yields
\be
\beta|c|\frac{d\psi(\btheta)}{d\btheta} = \int \beta |c| \by\cdot p(\by|\beta,\btheta) d\by,
\quad \frac{d\psi(\btheta)}{d\btheta} = \bbe[\by].
\ee
Also, we have
\be
\beta|c|\frac{\partial^2\psi(\btheta)}{\partial\theta_j\partial\theta_k}
&=& \int \beta|c| y_j p(\by|\beta,\btheta) \bigg(
\beta|c| y_j - \beta|c|\frac{\partial\psi(\btheta)}{\partial\theta_k}\bigg) d y_j\nn
&=& \beta^2|c|^2 \bbe[y_jy_k] - \beta^2|c|^2 \bbe[\by]_j \bbe[\by]_k = 
\beta^2|c|^2 \mathrm{cov}(y_j,y_k).
\ee
Hence,
\be
\frac{1}{\beta|c|}\frac{\partial^2\psi(\btheta)}{\partial\theta_j\partial\theta_k} = \mathrm{cov}(y_j,y_k) = \int (y_j - \bbe[\by]_j)(y_k  - \bbe[\by]_k) p(\by|\beta,\btheta)dy_jdy_k.
\ee
Differentiating this again yields
\be\label{eq:third}
\frac{1}{\beta|c|}\frac{\partial^3\psi(\btheta)}{\partial\theta_j\partial\theta_k\partial\theta_l}
&=& \int (y_j-\bbe[\by]_j)(y_k-\bbe[\by]_k)(y_l-\bbe[\by]_l) p(\by|\beta,\btheta)dy_jdy_kdy_l\nn
&=& \bbe[(y_j-\bbe[\by]_j)(y_k-\bbe[\by]_k)(y_l-\bbe[\by]_l)].
\ee
Unfortunately, this relationship does not continue after the third order; the fourth derivative of 
$\psi(\btheta)$ is not exactly match to the fourth order central moment of $\by$. However,
one can easily maintain the $m$th order central moment by manipulating the $m$th order derivative
of $\psi(\btheta)$, and $m$th order central moment always have the constant term $(\beta|c|)^{-m}$.

Equation (40) of the paper is a simple consequence of the equation~\eqref{eq:third}. To prove the equation (41) of the paper, we use the following relationship:
\be
\bbe\bigg[\frac{\epsilon_\varphi(\bar{\bx}_c)}{\Delta_\varphi(\bar{\bx}_c)}\bigg]
\approx \frac{\bbe[\epsilon_\varphi(\bar{\bx}_c)]}{\bbe[\Delta_\varphi(\bar{\bx}_c)]} - \frac{\mathrm{cov}(\epsilon_\varphi(\bar{\bx}_c), \Delta_\varphi(\bar{\bx}_c))}{\bbe[\Delta_\varphi(\bar{\bx}_c)]^2} + \frac{\bbe[\epsilon_\varphi(\bar{\bx}_c)]\mathrm{var}[\Delta_\varphi(\bar{\bx}_c)]}{\bbe[\Delta_\varphi(\bar{\bx}_c)]^3}.
\ee
Now it is easy to show that this equation converges to zero when $\beta\to 0$; all the expectations
and variances can be obtained by differentiating $\psi(\btheta)$ for as many times as needed.

\end{document}